\documentclass[10pt,twocolumn,letterpaper]{article}

\usepackage{cvpr}
\usepackage{times}
\usepackage{epsfig}
\usepackage{graphicx}
\usepackage{amsmath}
\usepackage{amssymb}

% Include other packages here, before hyperref.

\usepackage{booktabs}       % professional-quality tables
\usepackage{amsfonts}  
\usepackage{amsmath}% blackboard math symbols
\usepackage{nicefrac}       % compact symbols for 1/2, etc.
\usepackage{microtype}      % microtypography
\usepackage{xcolor}
\usepackage{authblk}

\usepackage{booktabs}       % professional-quality 
\usepackage{epsfig}         % load vectorized graphics (.eps) into pdf. save SVGs to EPS
\usepackage{float}          % use \begin{figure}[H] to disable floating
\usepackage{graphicx}       % load raster graphics into LaTeX. Prefer vector formats!

\usepackage{microtype}      % microtypography
\usepackage{subfig}% use this for putting multiple figures in a single figure
\usepackage{url}            % simple URL typesetting
\usepackage{xspace}

\usepackage{multirow}
\usepackage{enumitem}
 \usepackage[ruled,vlined,noresetcount]{algorithm2e}
 \usepackage{algpseudocode}
 \usepackage{amsthm}
 
\usepackage{wrapfig}
\usepackage{bm}

\newtheorem{Thm}{Theorem}
\newtheorem*{Thm*}{Theorem}

% \usepackage{textcomp}
% \usepackage{lmodern}

% \usepackage{titlesec}
% \usepackage{lipsum}% provides filler text

% \titlespacing{\paragraph}{%
%   0pt}{%              left margin
%   0.5\baselineskip}{% space before (vertical)
%   1em}%               space after (horizontal)

% If you comment hyperref and then uncomment it, you should delete
% egpaper.aux before re-running latex.  (Or just hit 'q' on the first latex
% run, let it finish, and you should be clear).
\usepackage[pagebackref=true,breaklinks=true,letterpaper=true,colorlinks,bookmarks=false]{hyperref}

\cvprfinalcopy % *** Uncomment this line for the final submission

 % *** Enter the CVPR Paper ID here

% Pages are numbered in submission mode, and unnumbered in camera-ready
\ifcvprfinal\pagestyle{empty}\fi
\begin{document}

%%%%%%%%% TITLE
\title{The Secret Revealer: Generative Model-Inversion Attacks Against Deep Neural Networks}

\author[1]{Yuheng Zhang \thanks{Both authors contributed equally}}
\author[2]{Ruoxi Jia $^*$}
\author[1]{Hengzhi Pei}
\author[3]{Wenxiao Wang}
\author[4]{Bo Li}
\author[2]{Dawn Song}
\affil[1]{Fudan University} \affil[2]{University of California at Berkeley}
\affil[3]{Tsinghua University}
\affil[4]{University of Illinois at Urbana-Champaign}
\affil[ ]{\textit {\{yuhengzhang16,hzpei16\}@fudan.edu.cn, ruoxijia@berkeley.edu, wangwx16@mails.tsinghua.edu.cn, lxbosky@gmail.com, dawnsong@gmail.com}}

\maketitle
%\thispagestyle{empty}

%%%%%%%%% ABSTRACT
\begin{abstract}
% \vspace{-em}
This paper studies model-inversion attacks, in which the access to a model is abused to infer information about the training data. Since its first introduction by~\cite{fredrikson2014privacy}, such attacks have raised serious concerns given that training data usually contain privacy-sensitive information. Thus far, successful model-inversion attacks have only been demonstrated on simple models, such as linear regression and logistic regression. Previous attempts to invert neural networks, even the ones with simple architectures, have failed to produce convincing results. We present a novel attack method, termed the generative model-inversion attack, which can invert deep neural networks with high success rates. Rather than reconstructing private training data from scratch, we leverage partial public information, which can be very generic, to learn a distributional prior via generative adversarial networks (GANs) and use it to guide the inversion process. Moreover, we theoretically prove that a model's predictive power and its vulnerability to inversion attacks are indeed two sides of the same coin---highly predictive models are able to establish a strong correlation between features and labels, which coincides exactly with what an adversary exploits to mount the attacks. Our extensive experiments demonstrate that the proposed attack improves identification accuracy over the existing work by about $75\%$ for reconstructing face images from a state-of-the-art face recognition classifier. We also show that differential privacy, in its canonical form, is of little avail to defend against our attacks. 
\end{abstract}

%%%%%%%%% BODY TEXT
\vspace{-0.5em}
\section{Introduction}
\vspace{-0.5em}
Deep neural networks (DNNs) have been adopted in a wide range of applications, including computer vision, speech recognition, healthcare, among others. The fact that many compelling applications of DNNs involve processing sensitive and proprietary datasets raised great concerns about privacy. In particular, when machine learning (ML) algorithms are applied to private training data, the resulting models may unintentionally leak information about training data through their output (i.e., black-box attack) or their parameters (i.e., white-box attack).

A concrete example of privacy attacks is model-inversion (MI) attacks, which aim to reconstruct sensitive features of training data by taking
advantage of their correlation with the
model output. Algorithmically, MI attacks are implemented as an optimization problem seeking for the sensitive feature value that achieves the maximum likelihood under the target model. The first MI attack was proposed in the context of genomic privacy~\cite{fredrikson2014privacy}, where the authors showed that adversarial access to a linear regression model for personalized medicine can be abused to infer private genomic attributes about individuals in the training dataset. Recent work~\cite{fredrikson2015model} extended MI attacks to other settings, e.g., recovering an image of a person from a face recognition model given just their name, and other target models, e.g., logistic regression and decision trees.

Thus far, effective MI attacks have only been demonstrated on the aforementioned simple models. It remains an open question whether it is possible to launch the attacks against a DNN and reconstruct its private training data. The challenges of inverting DNNs arise from the intractability and ill-posedness of the underlying attack optimization problem. For neural networks, even the ones with one hidden layer, the corresponding attack optimization becomes a non-convex problem; solving it via gradient descent methods may easily stuck in local minima, which leads to poor attack performance. Moreover, in the attack scenarios where the target model is a DNN (e.g., attacking face recognition models), the sensitive features (face images) to be recovered often lie in a high-dimensional, continuous data space. Directly optimizing over the high-dimensional space without any constraints may generate unrealistic features lacking semantic information.
% (See Figure~\ref{fig:failure}).

% \begin{figure}[t]
%   \centering
%   \includegraphics[width=\linewidth]{}
  
%   \caption{Reconstruction of the individual on the left by attacking three face recognition models (logistic regression, one-hidden-layer and two-hidden-layer neural network) using the existing attack algorithm in~\cite{fredrikson2015model}}
   
%   \label{fig:failure}
% \end{figure}

% \begin{figure}[h]
%   \centering
%   \includegraphics[width=0.6\textwidth]{pics/failures.pdf}
%   \caption{Reconstruction of the individual on the left by attacking three face recognition models based on logistic regression, one-hidden-layer neural network, and two-hidden-layer neural network, respectively, using the existing MI attack algorithm in~\citep{fredrikson2015model}}
%   \label{fig:failure}
% \end{figure}

In this paper, we focus on image data and propose a simple yet effective attack method, termed the generative model-inversion (GMI) attack, which can invert DNNs and synthesize private training data with high fidelity. The key observation supporting our approach is that it is arguably easy to obtain information about the general data distribution, especially for the image case. For example, against a face recognition classifier, the adversary could randomly crawl facial images from the Internet without knowing the private training data. We find these datasets, although may not contain the target individuals, still provide rich knowledge about how a face image might be structured; extraction and proper formulation of such prior knowledge will help regularize the originally ill-posed inversion problem. 
% We propose to use GAN to learn an informative prior from public datasets and use it to guide the inversion process.
We also move beyond specific attack algorithms and explore the fundamental reasons for a model's susceptibility to inversion attacks.
% In addition, we present a theoretical result that uncovers the fundamental connection between a model's predictive power and its vulnerability to MI attacks. 
We show that the vulnerability is unavoidable for highly predictive models, since these models are able to establish a strong correlation between features  and  labels, which coincides exactly with what an adversary exploits to mount MI attacks.

Our contributions can be summarized as follows: (1) We propose to use generative models to learn an informative prior from public datasets so as to regularize the ill-posed inversion problem. (2) We propose an end-to-end GMI attack algorithm based on GANs, which can reveal private training data of DNNs with high fidelity. (3) We present a theoretical result that uncovers the fundamental connection between a model's predictive power and its susceptibility to general MI
% \bo{should we say general MI attacks here?} 
attacks and empirically validate it. (4) We conduct extensive experiments to demonstrate the performance of the proposed attack. (5) We show that differential privacy, a ``gold standard'' privacy notion nowadays, is of little avail to protect against our attacks, because it does not explicitly aim to protect the secrecy of attributes in training data. This raises the question: What is the right notion for attribute privacy? Answering this question is an important future work.

\vspace{-0.5em}
\section{Related Work}
\vspace{-0.5em}
% \paragraph{Related Work}
% %\vspace{-0.3cm}
Privacy attacks against ML models consist of methods that aim to reveal some aspects of training data. Of particular interest are membership attacks and MI attacks. Membership attacks aim to determine whether a given individual's data is used in training the model~\cite{shokri2017membership}. MI attacks, on the other hand, aim to reconstruct the features corresponding to specific target labels.

% , such as some recognizable attributes of an individual's face used in training a face identification model. 

In parallel to the emergence of various privacy attack methods, there is a line work that formalizes the privacy notion and develops defenses with formal and provable privacy guarantees. One dominate definition of privacy is differential privacy (DP), which carefully randomizes an algorithm so that its output does not to depend too much on any individuals’ data~\cite{dwork2014algorithmic}. In the context of ML algorithms, DP guarantees protect against attempts to infer whether a data record is included in the training set from the trained model~\cite{abadi2016deep}. By definition, DP limits the success rate of membership attacks. However, it does not explicitly protect attribute privacy, which is the target of MI attacks~\cite{fredrikson2014privacy}.

The first MI attack was demonstrated in~\cite{fredrikson2014privacy}, where the authors presented an algorithm to recover genetic markers given the linear regression that uses them as input features, the response of the model, as well as other non-sensitive features of the input. \cite{hidano2017model} proposed a algorithm that allows MI attacks to be carried out without the knowledge of non-sensitive features by poisoning training data properly. 
% Their idea is to take advantage of the fact that the training data is often crowdsourced and the adversary may easily modify the training set so that the trained model exclusively exploits the sensitive input features for prediction, thus facilitating the adversary to infer the sensitive features solely from the model response. 
Despite the generality of the algorithmic frameworks proposed in the above two papers, the evaluation of the attacks is only limited to linear models. \cite{fredrikson2015model} discussed the application of MI attacks to more complex models including some shallow neural networks in the context of face recognition. Although the attack can reconstruct face images with identification rates much higher than random guessing, the recovered faces are indeed blurry and hardly recognizable. Moreover, the quality of reconstruction tends to degrade for more complex architectures. \cite{yang2019adversarial} proposed to train a separate network that swaps the input and output of the target network to perform MI attacks. The inversion model can be trained with black-box accesses to the target model. However, their approach cannot directly be benefited from the white-box setting.

Moreover, several recent papers started to formalize MI attacks and study the factors that affect a model's vulnerability from a theoretical viewpoint. For instance, \cite{wu2016methodology} characterized model invertibility for Boolean functions using the concept of influence from Boolean analysis; \cite{yeom2018privacy} formalized the risk that the model poses specifically to individuals in the training data and shows that the risk increases with the degree of overfitting of the model. However, their theory assumed that the adversary has access to the join distribution of private feature and label, which is overly strong for many attack scenarios. Our theory does not rely on this assumption and better supports the experimental findings.

The algorithms of MI attacks resemble an orthogonal line of work on feature visualization~\cite{nguyen2016synthesizing,yosinski2015understanding}, which also attempts to reconstruct an image that maximally activates a target network. Our work differs from the existing work on feature visualization in that the proposed algorithm adopts a novel optimization objective which results in more realistic image recovery and can incorporate possible auxiliary knowledge of the attacker.

\vspace{-0.5em}
\section{Generative MI Attack}
\vspace{-0.5em}
An overview of our GMI attack is illustrated in Figure~\ref{fig:overview}. In this section, we will first discuss the threat model and then present our attack method in details.

\begin{figure}[t]
%\vspace{-4mm}
  \centering
  \includegraphics[width=\linewidth]{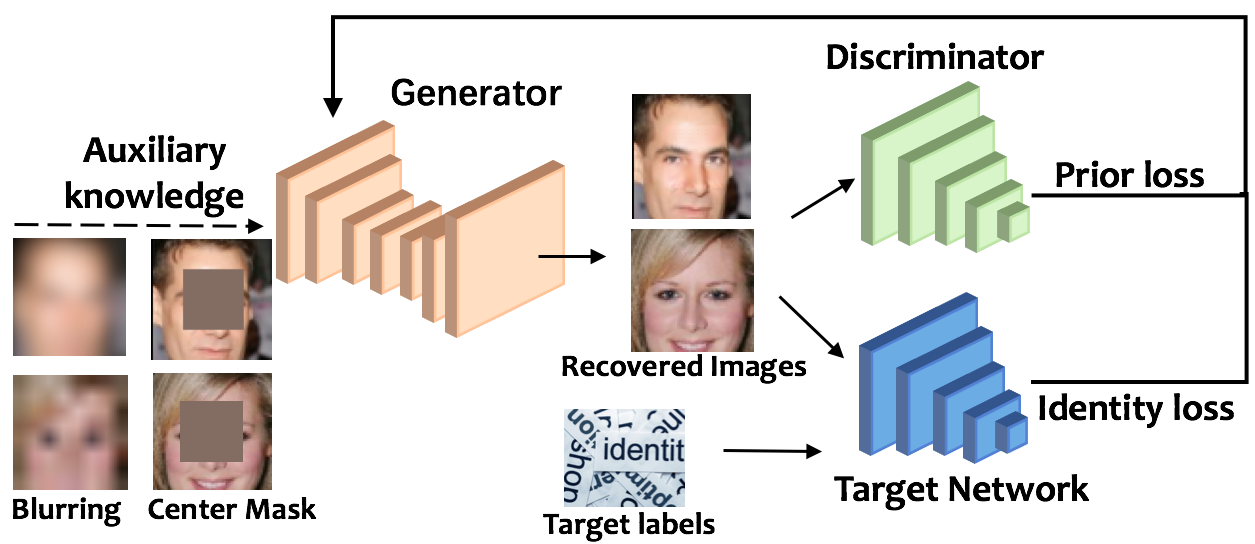}
    \vspace{-2em}
  \caption{Overview of the proposed GMI attack method.}
  \vspace{-1.5em}
  \label{fig:overview}
\end{figure}

\vspace{-0.5em}
\subsection{Threat Model}
\vspace{-0.5em}
In traditional MI attacks, an adversary, given a model trained to predict specific labels, uses it to make predictions of sensitive features used during training. Throughout the paper, we will refer to the model subject to attacks as the \emph{target network}. We will focus on the \emph{white-box} setting, where the adversary is assumed to have access to the target network $f$ and employs some inference technique to discover the features $x$ associated with a specific label $y$. In addition to $f$, the adversary may also have access to some auxiliary knowledge that facilitates his/her inference. We will use face recognition classifiers as a running example for the target network. Face recognition classifiers label an image containing a face with an identifier corresponding to the individual depicted in the image. The corresponding attack goal is to recover the face image for some specific identity based on the target classifier parameters.
% We assume that the adversary employs an inference technique to discover the face image $x$ for some specific identity $y$ output by the classifier $f$. Following the canonical setup of MI attacks, we assume that the adversary has access to the target network $f$. 
% In addition to $f$, the adversary may also have access to some auxiliary knowledge that facilitates his inference. 
% For notational convenience, we define a binary mask $M$ with the size equal to the image $x$, indicating the nonsensitive features in the image available to the adversary. The goal of the adversary is to reconstruct the entire image $x$ given $f$, $y$ and $M\circ x$.
\vspace{-1.2em}
\paragraph{Possible Auxiliary Knowledge.} Examples of auxiliary knowledge could be a corrupted image which only contains nonsenstive information, such as background pixels in a face image, or a blurred image. This auxiliary knowledge might be easy to obtain, as blurring and corruption are often applied to protect anonymity of individuals in public datasets~\cite{carrell2012hiding,li2019hideme}.

% sensitive features for face images include nose, eyes, mouth, etc. On the other hand, the background of the image could be considered nonsensitive.

%\vspace{-0.3cm}
\vspace{-1.2em}
\paragraph{Connection to Image Inpainting.} The setup of MI attacks on images resembles the widely studied image inpainting tasks in computer vision, which also try to fill missing pixels of an image. The difference is, however, in the goal of the two. MI attacks try to fill the sensitive features associated with a specific identity in the training set. In contrast, image inpainting tasks only aim to synthesize visually realistic and semantically plausible pixels for the missing regions; whether the synthesized pixels are consistent with a specific identity is beyond the scope. Despite the difference, our approach to MI attacks leverages some training strategies from the venerable line of work on image inpainting~\cite{yeh2017semantic,iizuka2017globally,yang2019diversity} and significantly improves the realism of the reconstructed images over the existing attack methods.

% \textbf{The targeted network} is trained on a private dataset that is unknown to the attacker. The goal of the attacker is to recover sensitive training features for any specific label given only some auxiliary knowledge about it and the parameters of the targeted network. In this paper, we focus on image data and consider the case where the auxiliary knowledge of the attacker is a corrupted version of some private training image with all privacy-sensitive attributes missing. As a concrete example, consider that the targeted network is a face recognition classifier trained on a private face dataset. The attacker has access to a corrupted private training image that has all identity-revealing features (e.g., eyes, nose and mouth) obstructed and only retains the background information. The attacker aims to reconstruct blocked private features in the image based on the network parameters available to him.

%\vspace{-0.2cm}
\vspace{-0.5em}
\subsection{Inferring Missing Sensitive Features}
\vspace{-0.5em}
%\vspace{-0.2cm}
\label{section:pipeline}
To realistically reconstruct missing sensitive regions in an image, our approach utilizes the generator $G$ and the discriminator $D$, all of which are trained with public data. After training, we aim to find the latent vector $\hat{z}$ that achieves highest likelihood under the target network while being constrained to the data manifold learned by $G$. However, if not properly designed, the generator may not allow the target network to easily distinguish between different latent vectors. For instance, in extreme cases, if the generated images of all latent vectors collapse to the same point in the feature space of the target network, then there is no hope to identify which one is more likely to appear in its private training set of the target network. To address this issue, we present a simple yet effective loss term to promote the diversity of the data manifold learned by $G$ when projected to the target network's feature space.

Specifically, our reconstruction process consists of two stages: (1) \emph{Public knowledge distillation}, in which we train the generator and the discriminators on public datasets in order to encourage the generator to generate realistic-looking images. The public datasets can be unlabeled and have no identity overlapping with the private dataset. (2) \emph{Secret revelation}, in which we make use of the generator obtained from the first stage and solve an optimization problem to recover the missing sensitive regions in an image.

For the first stage, we leverage the canonical Wasserstein-GAN~\cite{arjovsky2017wasserstein} training loss:
\begin{equation}
\begin{aligned}
\min_G\max_{D} L_\text{wgan}(G,D) = E_x[D(x)] - E_z[D(G(z))]
\end{aligned}
\end{equation}
When the auxiliary knowledge (e.g., blurred or corrupted version of the private image) is available to the attacker, we let the generator take the auxiliary knowledge as an additional input. Moreover, when the extra knowledge is a corrupted image, we adopt two discriminators for discerning whether an image is real or artificial, like~\cite{iizuka2017globally}. The global discriminator looks at the reconstructed image to assess if it is coherent as a whole, while the local discriminator looks only at a randomly selected patch containing the mask boundary to ensure the local consistency of the generated patches at the boundary area.
% centered \textcolor{red}{the area is randomly generated which ensures more consistency} at the completed region to ensure the local consistency of the generated patches. 
However, different from~\cite{iizuka2017globally} which fuses the outputs of the two discriminators together by a concatenation layer that predicts a value corresponding to the
probability of the image being real, we allow two discriminators to have separate outputs, as we find it make the training loss converge faster empirically. The detailed architecture of the GAN is presented in the supplementary material.
% Appendix~\ref{appendix:arch}.

In addition, inspired by~\cite{yang2019diversity}, we introduce a diversity loss term that promotes the diversity of the images synthesized by $G$ when projected to the target network's feature space. Let $F$ denote the feature extractor of the target network. The diversity loss can thus be expressed as
\begin{align}
\label{eqn:diversity_loss}
    \max_G L_\text{div}(G) = E_{\mathbf{z_1},\mathbf{z_2}}\bigg[\frac{\left\|F(G(\mathbf{z_1}))-F(G(\mathbf{z_2}))\right\|}{\left\|\mathbf{z_1} - \mathbf{z_2}\right\|}\bigg]
\end{align}
As discussed above, larger diversity will facilitate the targeted network to discern the generated image that is most likely to appear in its private training set. Our full objective for public knowledge distillation can be written as $\min_G\max_{D} L_\text{wgan}(G,D) - \lambda_d L_\text{div}(G)$.
% \begin{align}
%     \min_G\max_{D_l,D_g} L_\text{wgan}(G,D_l,D_g) - \lambda_d L_\text{div}(G)
% \end{align}

In the secret revelation stage, we solve the following optimization to find the latent vector that generates an image achieving the maximum likelihood under the target network while remaining realistic: $\hat{z} = \mathop{\arg\min}_z L_\text{prior}(z) + \lambda_i L_\text{id}(z)$,
% \begin{align}
%     \hat{z} = \argmin_z L_\text{prior}(z) + \lambda_i L_\text{id}(z)
% \end{align}
where the prior loss $L_\text{prior}(z)$ penalizes unrealistic images and the identity loss $L_\text{id}(z)$ encourages the generated images to have high likelihood under the targeted network. They are defined, respectively, by
\begin{align}
%   & L_\text{prior}(z) = - E_{\mathbf{z}}[D(G(\mathbf{z}))]
%     \label{eqn:identity_loss}
%   & \mathcal{L}_\text{id} = - E_{\mathbf{z}} \log[C(G(\mathbf{z}))]
   & L_\text{prior}(z) = - D(G(z))
    \label{eqn:identity_loss}
   & L_\text{id}(z) = - \log[C(G(z))]
\end{align}
where $C(G(z))$ represents the probability of $G(z)$ output by the target network. 
% Then, the identity loss can be written as
% \begin{align}
% \label{eqn:identity_loss}
%     \mathcal{L}_\text{id} = - E_{\mathbf{z}} \log[C(G(\mathbf{z}))]
% \end{align}

% About loss function:

% Our pipeline has two stages, the first stage is to train a stochastic inpainting network, the total loss consists of prior loss and diversity loss:

% prior loss:

% $$
% \mathcal{L}_p = E_{\mathbf{x} \sim \mathcal{P}_r}[D(\mathbf{x})] - E_{\mathbf{\hat{x}} \sim \mathcal{P}_g}[D(\mathbf{\hat{x}})]
% $$
% diversity loss:
% $$
% \mathcal{L}_d=E_{\mathbf{z_1},\mathbf{z_2}}[\frac{\left\|F(G(\mathbf{z_1}))-F(G(\mathbf{z_2}))\right\|}{\left\|\mathbf{z_1} - \mathbf{z_2}\right\|}]
% $$
% $F(G(\mathbf{z_1}))$ is a function of target dnn which outputs feature vector 

% objective function：
% $$
% \min \limits_{G} \max \limits_{D} \mathcal{L}_p + \lambda_1\mathcal{L}_d
% $$

% the second stage is to find the latent vector which minimizes the prior loss and identity loss

% prior loss:

% $$
% \mathcal{L}_p = - E_{\mathbf{z} \sim \mathcal{P}_z}[D(G(\mathbf{z}))] 
% $$
% identity loss
% $$
% \mathcal{L}_i = - E_{\mathbf{z} \sim \mathcal{P}_z} log[C(G(\mathbf{z}))]
% $$
% C(G(\mathbf{z})) is a function of target dnn which output the probability of true label

%\vspace{-0.3cm}
\vspace{-0.5em}
\section{Connection Between Model Predictive Power and MI Attacks}
\vspace{-0.5em}
%\vspace{-0.3cm}

% To explicate the connection between a model's predictive power and its vulnerability to MI attacks, we first formalize these two concepts. 

% For a fixed data point $(x,y)$, we can measure the power of the feature $x$ for predicting its label $y$ under the model $f$ using the log likelihood $\log p_f(y|x)$. It is known that maximizing the log likelihood is equivalent to minimizing the cross entropy loss---one of the most commonly used loss function for training DNNs. Thus, throughout the following analysis, we will focus on the log likelihood as a measure of predictive power.

For a fixed data point $(x,y)$, we can measure the performance of a model $f$ for predicting the label $y$ of feature $x$ using the log likelihood $\log p_f(y|x)$. It is known that maximizing the log likelihood is equivalent to minimizing the cross entropy loss---one of the most commonly used loss functions for training DNNs. Thus, throughout the following analysis, we will focus on the log likelihood as a model performance measure.

Now, suppose that $(X,Y)$ is drawn from an unknown data distribution $p(X,Y)$. Moreover, $X=(X_s,X_{ns})$, where $X_s$ and $X_{ns}$ denote the sensitive and non-sensitive part of the feature, respectively. We can define the predictive power of the sensitive feature $X_s$ under the model $f$ (or equivalently, the predictive power of model $f$ using $X_s$) as the change of model performance when excluding it from the input, i.e., $E_{(X,Y)\sim p(X,Y)}[\log p_f(Y|X_s,X_{ns}) - \log p_f(Y|X_{ns})]$. Similarly, we define the predictive power of the sensitive feature given a specific class $y$ and nonsensitive feature $x_{ns}$ as
%\vspace{-0mm}
\begin{equation}
% {\small
\begin{aligned}
    U_f(x_{ns},y) =  E_{X_s\sim p(X_s|y,x_{ns})} [\log& p_{f}(y|X_s, x_{ns}) \\
    &- \log p_{f} (y|x_{ns})]
\end{aligned}
% }
\end{equation}

%\vspace{-4mm}

% We define the \emph{predictive power} of a model $w$ as its expected performance over the entire data distribution, i.e., $\E_{(X,Y)\sim p(X,Y)}[\log p_f(Y|X)]$. Similarly, we define the predictive power of a model with respect to a specific class $y$ as
% \begin{align}
%     U_y(f) =  \E_{X\sim p(X|y)} [\log p_{f}(y|X)]
% \end{align}

We now consider the measure for the MI attack performance. Recall the goal of the adversary is to guess the value of $x_s$ given its corresponding label $y$, the model $f$, and some auxiliary knowledge $x_{ns}$.
% , where the goal of the adversary is to guess the value of a feature vector (or part of it) given only some public knowledge about it and its corresponding label.
The best attack outcome is the recovery of the entire posterior distribution of the sensitive feature, i.e., $p(X_s|y,x_{ns})$. However, due to the incompleteness of the information available to the adversary, the best possible attack result that adversary can achieve under the attack model can be captured by $p_f(X_s|y,x_{ns}) \propto p_f(y|X_s,x_{ns})p(X_s|x_{ns})$, assuming that the adversary can have a fairly good estimate of $p(X_s|x_{ns})$. Such estimate can be obtained by, for example, learning from public datasets using the method in Section~\ref{section:pipeline}. Although MI attack algorithms often output a single feature vector as the attack result, these algorithms can be adapted to output a feature distribution instead of a single point by randomizing the starting guess of the feature. Thus, it is natural to measure the MI attack performance in terms of the similarity between $p(X_s|y,x_{ns})$ and $p_f(X_s|y,x_{ns})$. The next theorem indicates that the vulnerability to MI attacks is unavoidable if the sensitive features are highly predictive under the model. When stating the theorem, we use the negative KL-divergence $S_\text{KL}(\cdot||\cdot)$ to measure the similarity between two distributions.

\begin{Thm}
Let $f_1$ and $f_2$ be two models such that for any fixed label $y\in \mathcal{Y}$, $ U_{f_1}(x_{ns},y)\geq U_{f_2}(x_{ns},y)$. Then, $S_\text{KL}(p(X_s|y,x_{ns})||p_{f_1}(X_s|y,x_{ns})) \geq  S_\text{KL}(p(X_s|y,x_{ns})||p_{f_2}(X_s|y,x_{ns}))$.
\end{Thm}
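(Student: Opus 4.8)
The plan is to prove something considerably stronger than the stated inequality: I will show that $S_\text{KL}(p(X_s|y,x_{ns})\|p_f(X_s|y,x_{ns}))$ is \emph{exactly} equal to $U_f(x_{ns},y)$ plus a quantity that does not depend on $f$. Since $S_\text{KL}(\cdot\|\cdot) = -D_\text{KL}(\cdot\|\cdot)$, the claimed monotonicity is then an immediate consequence, and no convexity or inequality machinery is required at all. This also gives a clean interpretation of the theorem: predictive power and (negative) attack distance coincide up to a model-independent offset.

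First I would expand the KL divergence using the Bayes-form definition of the attack distribution. Abbreviating the true posterior as $p := p(X_s|y,x_{ns})$ and writing $p_f(x_s|y,x_{ns}) = p_f(y|x_s,x_{ns})\,p(x_s|x_{ns})/Z_f$ with normalizer $Z_f$, the log-ratio inside the divergence splits into three pieces:
\begin{align*}
\log\frac{p(x_s|y,x_{ns})}{p_f(x_s|y,x_{ns})} = \log\frac{p(x_s|y,x_{ns})}{p(x_s|x_{ns})} - \log p_f(y|x_s,x_{ns}) + \log Z_f .
\end{align*}
Taking the expectation over $X_s\sim p(X_s|y,x_{ns})$, the first term becomes $D_\text{KL}(p(X_s|y,x_{ns})\|p(X_s|x_{ns}))$, which is manifestly independent of $f$. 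Because $Z_f$ does not depend on $x_s$, the third term contributes $\log Z_f$ directly. The remaining two terms then combine, via the definition of $U_f$, into $-\big(E_{X_s}[\log p_f(y|X_s,x_{ns})] - \log p_f(y|x_{ns})\big) = -U_f(x_{ns},y)$, giving $D_\text{KL}(p\|p_f) = D_\text{KL}(p(X_s|y,x_{ns})\|p(X_s|x_{ns})) - U_f(x_{ns},y)$, hence $S_\text{KL} = U_f(x_{ns},y) - c$ with $c$ independent of $f$. Monotonicity in $U_f$ finishes the argument.

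The one step that needs care—the main obstacle—is the identification of the normalizer in the middle of the calculation. For the two $f$-dependent terms to collapse to exactly $-U_f$, one must recognize that $Z_f = \int p_f(y|x_s,x_{ns})\,p(x_s|x_{ns})\,dx_s$ is precisely the marginalization of the model's conditional $p_f(y|X_s,x_{ns})$ against the true conditional prior $p(X_s|x_{ns})$, and that this is the very quantity denoted $p_f(y|x_{ns})$ in the definition of predictive power. I would therefore state this consistency explicitly as the reading of $p_f(y|x_{ns})$ under the attack model: the same object serves both as the Bayes normalizer of the attack posterior and as the baseline term in $U_f$. Once that convention is pinned down, the whole proof reduces to the one-line algebraic identity above; without it, the cross-terms would not cancel and the clean equivalence would be lost.
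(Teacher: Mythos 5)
Your proposal is correct and is essentially the paper's own argument: both rest on Bayes-expanding $p_f(X_s|y,x_{ns})$ as $p_f(y|X_s,x_{ns})\,p(X_s|x_{ns})/p_f(y|x_{ns})$ inside the KL divergence and cancelling the $f$-independent factor $p(X_s|x_{ns})$, so that what remains is exactly $-U_f(x_{ns},y)$. The only (cosmetic) difference is that you state the per-model identity $S_\text{KL}(p\|p_f)=U_f(x_{ns},y)-D_\text{KL}(p(X_s|y,x_{ns})\|p(X_s|x_{ns}))$ explicitly, whereas the paper subtracts the two divergences directly so the $f$-independent constant never appears; your explicit identification of the normalizer $Z_f$ with $p_f(y|x_{ns})$ is the same convention the paper uses implicitly.
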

%\vspace{-2mm}
We omit the proof of the theorem to the supplementary material.
% Appendix~\ref{appendix:proof}.
Intuitively, highly predictive models are able to build a strong correlation between features and labels, which coincides exactly with what an adversary exploits to launch MI attacks; hence, more predictive power inevitably leads to higher attack performance.

In~\cite{yeom2018privacy}, it is argued that a model is more vulnerable to MI attacks if it overfits data to a greater degree. Their result is seemingly contradictory with ours, because fixing the training performance, more overfitting implies that the model has less predictive power. However, the assumption underlying their result is fundamentally different from ours, which leads to the disparities. The result in~\cite{yeom2018privacy} assumes that the adversary has access to the joint distribution $p(X_s,X_{ns},Y)$ that the private training data is drawn from and their setup of the goal of the MI attack is to learn the sensitive feature associated with a given label in a specific training dataset. By contrast, our formulation of MI attacks is to learn about private feature distribution $p(X_s|y,x_{ns})$ for a given label $y$ from the model parameters. We do not assume that the adversary has the prior knowledge of $p(X_s,X_{ns},Y)$, as it is a overly strong assumption for our formulation---the adversary can easily obtain $p(X_s|y,x_{ns})$ for any labels and any values of non-sensitive features when having access to the joint distribution.

\vspace{-0.5em}
\section{Experiments}
% \vspace{-0.5em}

% \vspace{-0.5em}
\subsection{Experimental Setup}
\vspace{-0.5em}

\paragraph{Dataset.} We evalaute our method using three datasets: (1) the MNIST handwritten digit data~\cite{Lecun1998Gradient-Based} (\texttt{MNIST}), (2) the Chest X-ray Database~\cite{wang2017chestx} (\texttt{ChestX-ray8}), and (3) the CelebFaces Attributes Dataset~\cite{liu2015deep} (\texttt{CelebA}) containing 202,599 face images of 10,177 identities with coarse alignment. We crop the images at the center and resize them to 64$\times$64 so as to remove most background. 

\vspace{-1.2em}\paragraph{Protocol.} We split each dataset into two disjoint parts: one part used as the private dataset to train the target network and the other as a public dataset for prior knowledge distillation. \emph{The public data, throughout the experiments, do not have class intersection with the private training data of the target network.} Therefore, the public dataset in our experiment only helps the adversary to gain knowledge about features generic to all classes and does not provide information about private, class-specific features for training the target network. This ensures the fairness of the comparison with the existing MI attack~\cite{fredrikson2015model}.

\vspace{-1.2em}\paragraph{Models.} We implement several different target networks with varied complexities. Some of the networks are adapted from existing ones by adjusting the number of outputs of their last fully connected layer to our tasks. For the digit classification on \texttt{MNIST}, our target network consists of 3 convolutional layers and 2 pooling layers. For the disease prediction on \texttt{ChestX-ray8}, we use ResNet-18 adapted from~\cite{He2015DeepRL}. For the face recognition tasks on \texttt{CelebA}, we use the following networks: (1) \texttt{VGG16} adapted from~\cite{Simonyan2014Very}; (2) \texttt{ResNet-152} adapted from ~\cite{He2015DeepRL}; (3) \texttt{face.eoLVe} adapted from the state-of-the-art face recognition network~\cite{cheng2017know}.

\vspace{-1.2em}\paragraph{Training.} We split the private dataset defined above into training set (90\%) and test set (10\%) and use the SGD optimizer with learning rate $10^{-2}$, batch size $64$, momentum $0.9$ and weight decay $10^{-4}$ to train these networks. To train the GAN in the first stage of our attack pipeline, we set $\lambda_d=0.5$ and use the Adam optimizer with the learning rate $0.004$, batch size $64$, $\beta_1=0.5$, and $\beta_2=0.999$~\cite{kingma2014adam}. In the second stage, we set $\lambda_i=100$ and use the SGD optimizer to optimize the latent vector $z$ with the learning rate $0.02$, batch size $64$ and momentum $0.9$. $z$ is drawn from a zero-mean unit-variance Gaussian distribution. We randomly initialize $z$ for 5 times and optimize each round for 1500 iterations. We choose the solution with the lowest identity loss as our final latent vector.

% \vspace{-1.2em}\paragraph{Targeted DNNs}
% We have implemented several different DNNs trained on private dataset. They are VGG16, Lenet, SimpleCNN, SoftmaxNet and EvolveNet. VGG16 is adapted from ... and we modify the last fc layer. Lenet is adapted from ... and has three convolution layers, two max pooling layers and one fc layer. SimpleCNN has five convolution layers, after each convolution layer is a batch normalization layer and a leakyrelu layer. It also has two max pooling layers. SoftmaxNet only has one fc layer. EvolveNet is adapted from the state-of-the art face recognition network ... and we modify the parameters of fc layers. Private dataset has 21,152 images with 1000 identities. We split it into training set(90\%) and test set(10\%) and use SGD optimizer with learning rate 1e-2 and weight decay 1e-4 to train them.
% \vspace{-1.2em}\paragraph{Attack Model}
% In stage 1, We use Adam optimizer with learning rate 1e-3, $\beta_1$ 0.5, and $\beta_2$ 0.999 to train attack model on public dataset and set $\lambda_d$ to 0.5. In stage 2, we use SGD optimizer to optimize the latent vector $z$ with learning rate 1e-2 and set $\lambda_i$ to 100. Latent vector $z$ is drawn from a gaussian(mean 0, stdevv 1). We randomly initialize z for 5 times and optimize each one for 1500 iterations, we choose the one with the lowest identity loss as our final latent vector.
% \vspace{-1.2em}\paragraph{Masks}
% We design two shapes of masks to evaluate our attack model which hides most of the identity information.1)Central block mask.2)Face block masks. It is shown in figure...

\vspace{-0.5em}
\subsection{Evaluation Metrics} 
\vspace{-0.5em}
Evaluating the success of MI attacks requires to assess whether the recovered image exposes the private information about a target label. Previous works analyzed the attack performance mainly qualitatively by visual inspection. Herein, we introduce four metrics which allow for quantitatively assessing the MI attack efficacy and performing evaluation at a large scale. 

\vspace{-1.2em}\paragraph{Peak Signal-to-Noise Ratio (PSNR).} PSNR is the ratio of an image's maximum squared pixel fluctuation over the mean squared error between the target image and the reconstructed image~\cite{hore2010image}. PSNR measures the pixel-wise similarity between two images. The higher the PSNR, the better the quality of the reconstructed image. 

However, oftentimes, the reconstructed image may still reveal identity information even though it is not close to the target image pixel-wise. For instance, a recovered face with different translation, scale and rotation from the target image will still incur privacy loss. This necessitates the need for the following metrics that can evaluate the similarity between the reconstructed and the target image at a semantic level. 

\vspace{-1.2em}\paragraph{Attack Accuracy (Attack Acc).} 
% To better assess the reconstructed image can reveal private information attached to the target individual, w
We build an \emph{evaluation classifier} that predicts the identity based on the input reconstructed image. If the evaluation classifier achieves high accuracy, the reconstructed image is considered to expose private information about the target label. The evaluation classifier should be different from the target network because the reconstructed images may incorporate features that overfit the target network while being semantically meaningless. Moreover, the evaluation classifier should be highly performant. For the reasons above, we adopt the state-of-the-art architecture in each task as the evaluation classifier. For \texttt{MNIST}, our evaluation network consists of 5 convolutional layers and 2 pooling layers. For \texttt{ChestX-ray8}, our evaluation network is adapted from VGG-19 ~\cite{Simonyan2014Very}. For \texttt{CeleA}, we use the model in~\cite{cheng2017know} as the evaluation classifier. We first pretrain it on the \texttt{MS-Celeb-1M}~\cite{guo2016ms} and then fine tune on the identities in the training set of the target network. The resulting evaluation classifier can achieve $96\%$ accuracy on these identities.

% power of noise due to reconstruction. In our case, $PSNR= 10\log_{10} (R^2/MSE)$, where $R$ represents the maximum fluctuation of the pixel value. $MSE$ denotes the mean squared error between the target image and the reconstructed image, i.e., $MSE = \frac{\sum_{M,N} [x_{m,n} - \hat{x}_{m,n}]}{MN}$, where $M$ and $N$ are the height and width of the images. 
\vspace{-1.2em}\paragraph{Feature Distance (Feat Dist).} Feat Dist measures the $l_2$ feature distance between the reconstructed image and the centroid of the target class. The feature space is taken to be the output of the penultimate layer of the evaluation network. 

\vspace{-1.2em}\paragraph{K-Nearest Neighbor Distance (KNN Dist).} KNN Dist looks at the shortest distance from the reconstructed image to the target class. We identify the closest data point to the reconstructed image in the training set and output their distance. The distance is measured by the $l_2$ distance between the two points in the feature space of the evaluation classifier.

% \textcoor{red}{Given many images contain sunglasses and side faces, it is relatively very high.}

\vspace{-0.5em}
\subsection{Experimental Results}
\vspace{-0.5em}

We compare our proposed GMI attack with the existing model-inversion attack (EMI), which implements the algorithm in~\cite{fredrikson2015model}. In this algorithm, the adversary only exploits the identity loss for image reconstruction and returns the pixel values that minimize the the identity loss. 
% In the attack scenario where the attacker has access to auxiliary knowledge, such as blurred or corrupted image, w
Another baseline is pure image inpainting (PII). PII minimizes the W-GAN loss and performs image recovery based purely on the information completely from the public dataset. The comparison with PII will indicate whether our attack truly reveals private information or merely learns to output a realistic-looking image. The network architectures for PII are exhibited in the the supplementary material.
% Appendix~\ref{appendix:arch}.
% (3) \texttt{A+I}, which minimizes the W-GAN loss in the prior knowledge distillation stage and seeks for the latent vector that minimizes the sum of the prior loss and the identity loss. In this setting, we exclude the diversity loss from our complete attack pipeline and use it as a baseline to examine the impact of diversity loss. 
% We denote the complete attack pipeline we proposed in Section~\ref{section:pipeline} as GMI.

\vspace{-0.5em}
\subsubsection{Attacking Face Recognition Classifiers}
\vspace{-0.5em}

For \texttt{CelebA}, the private set comprises 30,000 images of 1000 identities and samples from the rest are used as a public dataset. We evaluate the attack performance in the three settings: (1) the attacker does not have any auxiliary knowledge about the private image, in which case he/she will recover the image from scratch; (2) the attacker has access to a blurred version of the private image and his/her goal is to deblur the image; (3) the attacker has access to a corrupted version of the private image wherein the sensitive, identity-revealing features (e.g., nose, mouth, etc) are blocked. 

Table~\ref{table:without} compares the performance of our proposed GMI attack against EMI and PII for different network architectures. We can see that EMI can hardly attack deep nets and achieves around zero attack accuracy. Since EMI does not exploit any prior information, the inversion optimization problem is extremely ill-posed and performing gradient descent ends up at some visually meaningless local minimum, as illustrated by Figure~\ref{fig:img_results}.
% the recovered faces from the existing attack algorithm do not exhibit any recognizable features of the private data.
Interestingly, despite having the meaningless patterns, these images can all be classified correctly into the target label by the target network. Hence, \emph{ the existing MI attack tends to generate ``adversarial examples''~\cite{goodfellow2014explaining} that can fool the target network but does not exhibit any recognizable features of the private data.} GMI is much more effective than EMI. Particularly, our method improves the accuracy of the attack against the state-of-the-art \texttt{face.evoLVe} classifier over EMI by 75\% in terms of Top-5 attack accuracy. Also, note that models that are more sophisticated and have more predictive power are more susceptible to attacks. We will examine this phenomenon in more details in Section~\ref{section:exp_power}. 

Figure~\ref{fig:img_results} also compares GMI with PII, which synthesizes a face image completely based on the information from the public dataset. We can see that although PII leads to realistic recoveries, the reconstructed images do not present the same identity features as the target images. This can be further corroborated by the quantitative results in Table~\ref{table:without}.

\begin{table}[ht!]
\caption{Comparison of GMI with EMI and PII, when the attacker does not have any auxiliary knowledge.}
\label{table:without}
\centering
\resizebox{\linewidth}{!}{ 
\begin{tabular}{cccccc}
\toprule
                                                \textbf{ Model }         &\textbf{ Attack   }          & \textbf{KNN Dist} & \textbf{Feat Dist} & \textbf{Attack Acc} & \textbf{Top-5 Attack Acc} \\ \hline
\multicolumn{1}{c}{\multirow{3}{*}{\textbf{VGG16}}}       & \textbf{EMI} & 2397.50           & 2255.54            & 0                   & 0                         \\
\multicolumn{1}{c}{}                                      & \textbf{PII} & 2368.77           & 2425.09            & 0                   & 0                         \\
\multicolumn{1}{c}{}                                      & \textbf{GMI} & \textbf{2098.92}  & \textbf{2012.10}   & \textbf{28}         & \textbf{53}               \\ \hline
\multicolumn{1}{c}{\multirow{3}{*}{\textbf{ResNet-152}}}  & \textbf{EMI} & 2422.99           & 2288.13            & 0                   & 1                         \\
\multicolumn{1}{c}{}                                      & \textbf{PII} & 2368.77           & 2425.09            & 0                   & 0                         \\
\multicolumn{1}{c}{}                                      & \textbf{GMI} & \textbf{1969.09}  & \textbf{1886.44}   & \textbf{44}         & \textbf{72}               \\ \hline
\multicolumn{1}{c}{\multirow{3}{*}{\textbf{face.evolve}}} & \textbf{EMI} & 2371.52           & 2248.81            & 0                   & 1                         \\
\multicolumn{1}{c}{}                                      & \textbf{PII} & 2368.77           & 2425.09            & 0                   & 0                         \\
\multicolumn{1}{c}{}                                      & \textbf{GMI} & \textbf{1923.72}  & \textbf{1802.62}   & \textbf{46}         & \textbf{76}               \\ \bottomrule
\end{tabular}
}
\vspace{-0.5em}
\end{table}

\begin{figure*}[t]
  \centering
  \includegraphics[width=1\textwidth]{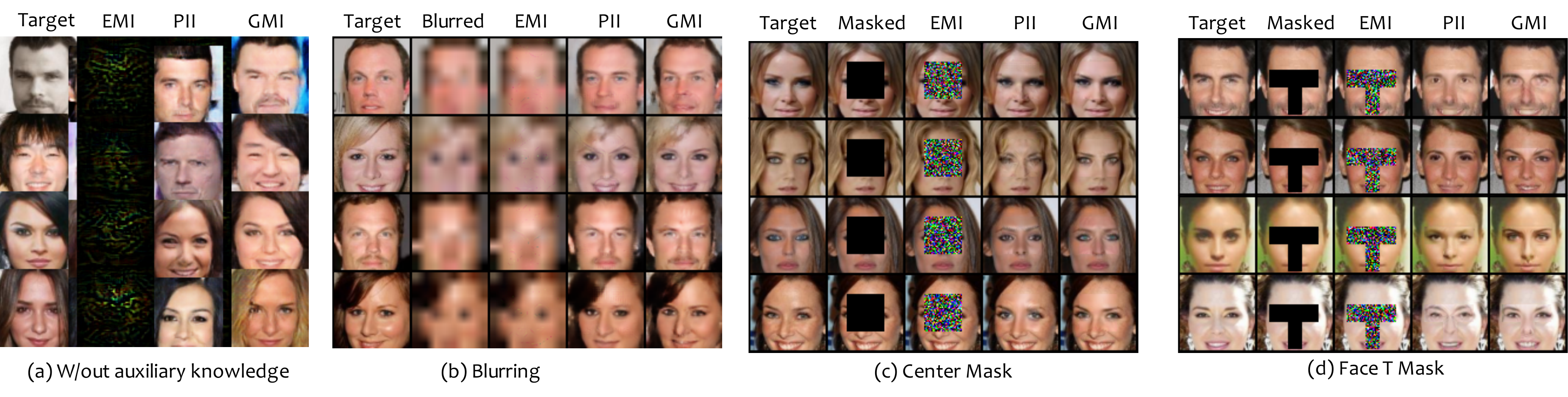}
  \vspace{-2.2em}
  \caption{Qualitative comparison of the proposed GMI attack with the existing MI attack (EMI). When the attacker has access to blurred or corrupted private images as auxiliary knowledge, we additionally compare with the pure image inpainting method (PII). The ground truth target image is shown in 1st col.
%   (5th col.), the existing attack (3th col.) and the reconstruction result based purely on the prior knowledge from the public datasets (4th col.). The ground truth target image and the masked one are shown in 1st and 2nd col., respectively. 
%   \bo{people may ask for A+I results in this figure?}
}
\vspace{-0.5em}

  \label{fig:img_results}
\end{figure*}

We now discuss the case where the attacker has access to some auxilliary knowledge in terms of blurred or partially blocked images. For the latter, we consider two types of masks---center and face ``T'', illustrated by the second column of Figure~\ref{fig:img_results} (c) and (d), respectively. The center mask blocks the central part of the face and hides most of the identity-revealing features, such as eyes and nose, while the face T mask is designed to obstruct all private features in a face image. EMI takes into account the auxiliary knowledge by using it as a starting point to optimize identity loss. GMI and PII add another branch in the generator to take the auxiliary knowledge as an extra input; the detailed architectures can be found in the supplementary material. Table~\ref{table:method_compare} shows that our method consistently outperforms EMI and PII. In particular, the comparison between GMI and PII indicates that the improved attack performance of GMI over EMI is \emph{not} merely due to the realistic recovery---it truly reveals private information from the target networks. Moreover, the attacks are more effective for the center mask than the face T mask. This is because the face T mask we designed completely hides the identity revealing features on the face while the center mask may still expose the mouth information.

% Comparing the attack performance between \texttt{A+I} and \texttt{A+I+D}, we obtain that the use of diversity loss in the prior knowledge distillation stage is crucial to boost the attack performance. 
% For instance, training the GAN with the diversity loss results in 31\% and 30\% performance improvement for attacking the \texttt{face.evoLVe} classifier.

\begin{table*}[ht!]
\caption{Comparison of GMI with EMI and PII, when the attacker can access blurred and corrupted private images. }
\vspace{-0.5em}
\label{table:method_compare}
\centering
\resizebox{0.7\linewidth}{!}{ 
\begin{tabular}{ccccccccccc}
\toprule
                                                   \multirow{2}{*}{\textbf{Model}}      &           \multirow{2}{*}{\textbf{Metric}}           & \multicolumn{3}{c}{\textbf{Blurring}}                                                                  & \multicolumn{3}{c}{\textbf{Center Mask}}      & \multicolumn{3}{c}{\textbf{Face T mask}}      \\ \cline{3-11} 
                                                           &                     & \multicolumn{1}{c}{\textbf{EMI}} & \multicolumn{1}{c}{\textbf{PII}} & \multicolumn{1}{c}{\textbf{GMI}} & \textbf{EMI} & \textbf{PII} & \textbf{GMI}     & \textbf{EMI} & \textbf{PII} & \textbf{GMI}     \\ \midrule
\multicolumn{1}{c}{\multirow{4}{*}{\textbf{VGG16}}}       & \textbf{PSNR}       & 19.66                            & 20.78                            & \textbf{21.97}                    & 18.69        & 25.49        & \textbf{27.58}   & 19.77        & 24.05        & \textbf{26.79}   \\
\multicolumn{1}{c}{}                                      & \textbf{Feat Dist}  & 2073.56                          & 2042.99                          & \textbf{1904.56}                  & 1651.72      & 1866.07      & \textbf{1379.26} & 1798.85      & 1838.31      & \textbf{1655.35} \\
\multicolumn{1}{c}{}                                      & \textbf{KNN Dist}   & 2164.40                          & 2109.82                          & \textbf{1946.97}                  & 1871.21      & 1772.74      & \textbf{1414.37} & 1980.68      & 1916.67      & \textbf{1742.74} \\
\multicolumn{1}{c}{}                                      & \textbf{Attack Acc} & 0\%                              & 6\%                              & \textbf{43\%}                     & 14\%         & 34\%         & \textbf{78\%}    & 11\%         & 20\%         & \textbf{58\%}    \\ \midrule
\multicolumn{1}{c}{\multirow{4}{*}{\textbf{ResNet-152}}}  & \textbf{PSNR}       & 19.63                            & 20.78                            & \textbf{22.00}                    & 18.69        & 25.49        & \textbf{27.34}   & 19.89        & 24.05        & \textbf{26.64}   \\
\multicolumn{1}{c}{}                                      & \textbf{Feat Dist}  & 2006.46                          & 2042.99                          & \textbf{1899.79}                  & 1635.03      & 1866.07      & \textbf{1375.36} & 1641.31      & 1838.31      & \textbf{1594.81} \\
\multicolumn{1}{c}{}                                      & \textbf{KNN Dist}   & 2101.13                          & 2109.82                          & \textbf{1922.14}                  & 1859.78      & 1772.74      & \textbf{1403.24} & 1847.74      & 1916.67      & \textbf{1670.05} \\
\multicolumn{1}{c}{}                                      & \textbf{Attack Acc} & 1\%                              & 6\%                              & \textbf{50\%}                     & 9\%          & 34\%         & \textbf{80\%}    & 11\%         & 20\%         & \textbf{63\%}    \\ \midrule
\multicolumn{1}{c}{\multirow{4}{*}{\textbf{face.evoLVe}}} & \textbf{PSNR}       & 19.64                            & 20.78                            & \textbf{22.04}                    & 18.97        & 25.49        & \textbf{27.69}   & 19.86        & 24.05        & \textbf{25.77}   \\
\multicolumn{1}{c}{}                                      & \textbf{Feat Dist}  & 1997.93                          & 2042.99                          & \textbf{1878.38}                  & 1609.35      & 1866.07      & \textbf{1364.42} & 1762.57      & 1838.31      & \textbf{1624.95} \\
\multicolumn{1}{c}{}                                      & \textbf{KNN Dist}   & 2085.53                          & 2109.82                          & \textbf{1904.47}                  & 1824.10      & 1772.74      & \textbf{1403.19} & 1962.07      & 1916.67      & \textbf{1682.56} \\
\multicolumn{1}{c}{}                                      & \textbf{Attack Acc} & 1\%                              & 6\%                              & \textbf{51\%}                     & 12\%         & 34\%         & \textbf{82\%}    & 11\%         & 20\%         & \textbf{64\%}    \\ \bottomrule
\end{tabular}
}
\vspace{-1.5em}
\end{table*}

Moreover, we examine the performance of the proposed attack for recovering some \emph{implicit} attributes of the private images, such as gender, age, hair style, among others. For some attributes in \texttt{CelebA}, the number of individuals with the attribute is significantly different from that without the attribute. It will be very easy to achieve a high accuracy for recovering these attributes as the attacker can just always output the majority. Therefore, we only focus on some private-sensitive attributes for which \texttt{CelebA} is roughly balanced. Table~\ref{table:attribute} shows that GMI also outperforms EMI in terms of recovering the attributes in various attack settings.

\begin{table}[ht!]
\caption{Comparison of GMI with EMI and PII for recovering implicit attributes of the private images. The attack performance is measured by the accuracy (\%) of a classifier trained to detect a specific attribute in a face image. }
\vspace{-0.5em}
\label{table:attribute}
\centering
\resizebox{\linewidth}{!}{
\begin{tabular}{cccccccc}
\toprule
\multicolumn{1}{c}{\textbf{Setting}} & \multicolumn{1}{c}{\textbf{Attack}} & \textbf{\begin{tabular}[c]{@{}c@{}}Blond\\  Hair\end{tabular}} & \textbf{\begin{tabular}[c]{@{}c@{}}Bushy \\ Eyebrows\end{tabular}} & \textbf{Glasses} & \textbf{Male} & \textbf{Mustache} & \textbf{Young} \\ \hline
\multicolumn{1}{c}{\multirow{3}{*}{\textbf{\begin{tabular}[c]{@{}c@{}}W/out \\ Aux.\\ Knowledge\end{tabular}}}} & \textbf{EMI} & 55 & 65 & 63 & 47 & 74 & 51 \\
\multicolumn{1}{c}{} & \textbf{PII} & 64 & 65 & 78 & 51 & 70 & 61 \\
\multicolumn{1}{c}{} & \textbf{GMI} & \textbf{78} & \textbf{76} & \textbf{90} & \textbf{74} & \textbf{88} & \textbf{70} \\ \hline
\multicolumn{1}{c}{\multirow{3}{*}{\textbf{\begin{tabular}[c]{@{}c@{}}Center \\ Mask\end{tabular}}}} & \textbf{EMI} & 70 & 44 & 67 & 78 & 75 & 84 \\
\multicolumn{1}{c}{} & \textbf{PII} & 76 & 56 & 79 & 75 & 77 & 84 \\
\multicolumn{1}{c}{} & \textbf{GMI} & \textbf{94} & \textbf{79} & \textbf{94} & \textbf{95} & \textbf{92} & \textbf{97} \\ \hline
\multicolumn{1}{c}{\multirow{3}{*}{\textbf{\begin{tabular}[c]{@{}c@{}}Face T \\ Mask\end{tabular}}}} & \textbf{EMI} & 74 & 44 & 55 & 73 & 69 & 77 \\
\multicolumn{1}{c}{} & \textbf{PII} & 80 & 47 & 82 & 70 & 71 & 73 \\
\multicolumn{1}{c}{} & \textbf{GMI} & \textbf{89} & \textbf{71} & \textbf{95} & \textbf{86} & \textbf{90} & \textbf{94} \\ \hline
\multicolumn{1}{c}{\multirow{3}{*}{\textbf{Blurring}}} & \textbf{EMI} & 77 & 67 & 56 & 67 & 75 & 57 \\
\multicolumn{1}{c}{} & \textbf{PII} & 76 & 70 & 77 & 71 & 76 & 65 \\
\multicolumn{1}{c}{} & \textbf{GMI} & \textbf{86} & \textbf{84} & \textbf{92} & \textbf{90} & \textbf{85} & \textbf{82} \\ \bottomrule
\end{tabular}
}
\vspace{-1em}
\end{table}

\vspace{-0.5em}
\subsubsection{Impact of Public Knowledge}
\vspace{-0.5em}
We have seen that distilling prior knowledge and properly incorporating it into the attack algorithm are important to the success of MI attacks. In our proposed method, the prior knowledge is gleaned from public datasets through GAN. We now evaluate the impact of public datasets on the attack performance.

We first consider the case where the public data is from the same distribution as the private data and study how the size of the public data affects the attack performance. We change the size ratio (1:1, 1:4, 1:6, 1:10) of the public over the private data by varying the number of identities in the public dataset (1000, 250, 160, 100). As shown in Table~\ref{table:impact_pub}, the attack performance varies by less than $7\%$ when shrinking the public data size by 10 times.
% \dawn{pls explain in more detail what the numbers show in Table 2. }
% This suggests that the GAN only distills generic face features common to all identities. Such generic background knowledge is sufficient in regularizing the ill-posed optimizaiton problem underlying MI attacks.

\begin{table}[ht!]
\caption{Evaluation for the impact of public datasets on the attack accuracy.
% , given different public/private data ratio when public and private data are from the same distribution, and with vs. without preprocessing when they are different distributions.
}
\vspace{-0.5em}
\label{table:impact_pub}
\centering
\resizebox{\linewidth}{!}{
\begin{tabular}{c|cccc|cc|c}
\toprule
\multirow{2}{*}{\textbf{Model}}    & \multicolumn{4}{c|}{\textbf{CelebA$\rightarrow$CelebA}}    & \multicolumn{2}{c|}{\textbf{PubFig83$\rightarrow$CelebA}} & \multirow{2}{*}{\textbf{EMI}} \\ \cline{2-7}
                     & \textbf{1:1} & \textbf{1:4} & \textbf{1:6} & \textbf{1:10} & \textbf{W/o Preproc.}     & \textbf{W/ Preproc.}     &                                    \\ \hline
\textbf{VGG}         & 78\%         & 77\%         & 75\%         &   72\%            & 48\%                             & 67\%                   & 14\%                               \\
\textbf{LeNet}   & 81\%         & 75\%         & 77\%         &     75\%          & 52\%                             & 66\%                   & 9\%                                \\
\textbf{face.evoLVe} & 77\%         & 77\%         & 77\%         &     70\%        & 56\%                             & 70\%                   & 12\%                               \\ \bottomrule
\end{tabular}
}
\vspace{-1.5em}
\end{table}

Moreover, we study the effect of the distribution shift between the public and private data on the attack performance. We train the GAN on the \texttt{PubFig83} dataset, which contains 13,600 images with 83 identities, and attack the target network trained on \texttt{CelebA}. There are more faces with sunglasses in \texttt{PubFig83} than \texttt{CelebA}, which makes it harder to distill generic face information.
% \todo{how many images?} \todo{What are the main difference between the two datasets?} \textcolor{red}{In my observation, images from pubfig83 wear more sunglasses than celeba and so make it difficult to learn the prior face knowledge} 
Without any pre-processing, the attack accuracy drops by more than 20\% despite still outperforming the existing MI attack by a large margin. To further improve the reconstruction quality, we detect landmarks in the face images using the off-the-shelf detector\footnote{\url{http://dlib.net/files/shape_predictor_68_face_landmarks.dat.bz2}}, rotate the images such that the eyes lie on a horizontal line, and crop the faces
% according to the bounding box 
to remove the background. These pre-processing steps make the public datasets better present the face information, thus improving the attack accuracy significantly.
% \textcolor{red}{Yuheng: We detect landmarks in the face images and crop the faces according to the bounding box for excluding the background information.}

% \begin{table}[t!]
% \centering
% \resizebox{0.7\textwidth}{!}{
% \begin{tabular}{c|cccc|cc|c}
% \hline
% \multirow{2}{*}{}    & \multicolumn{4}{c|}{\textbf{CelebA$\rightarrow$CelebA}}    & \multicolumn{2}{c|}{\textbf{PubFig83$\rightarrow$CelebA}} & \multirow{2}{*}{\textbf{EMI}} \\ \cline{2-7}
%                      & \textbf{1:1} & \textbf{1:4} & \textbf{1:6} & \textbf{1:10} & \textbf{W/o Preproc.}     & \textbf{W/ Preproc.}     &                                    \\ \hline
% \textbf{VGG}         & 78\%         & 77\%         & 75\%         &   72\%            & 48\%                             & 67\%                   & 14\%                               \\
% \textbf{SimpleCNN}   & 81\%         & 75\%         & 77\%         &     75\%          & 52\%                             & 66\%                   & 9\%                                \\
% \textbf{face.evoLVe} & 77\%         & 77\%         & 77\%         &     70\%        & 56\%                             & 70\%                   & 12\%                               \\ \hline
% \end{tabular}
% }
% \caption{Evaluation for the impact of public datasets on the attack performance, given different public/private data ratio when public and private data are from the same distribution, and with vs. without preprocessing when they are different distributions.}
% \label{table:impact_pub}
% \end{table}

\begin{figure*}[ht!]
  \centering
  \includegraphics[width=\linewidth]{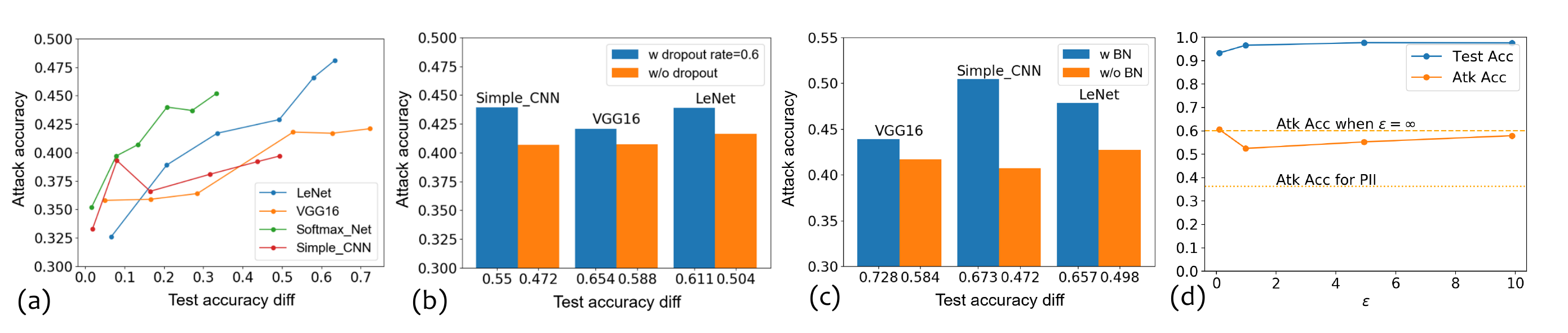}
\vspace{-2.5em}
  \caption{(a)-(c): The performance of the GMI attack against models with different predictive powers by varying training size, dropout, and batch normalization, respectively. (d) Attack accuracy of the GMI attack against models with different DP budgets. Attack accuracy of PII is plotted as a baseline.
%   \bo{people may ask for A+I results in this figure?}
}
\vspace{-1.5em}
 \label{fig:connection}
\end{figure*}

\vspace{-0.5em}
\subsubsection{Attacking Models with Varied Predictive Powers}
\vspace{-0.5em}
\label{section:exp_power}
%\vspace{-0.2cm}

We perform experiments to validate the connection between predictive power and the vulnerability to MI attacks. We measure the predictive power of sensitive feature under a model using the difference of model testing accuracy based on all features and just non-sensitive features. We consider the following different ways to construct models with increasing feature predictive powers, namely, enlarging the training size per class, adding dropout regularization, and performing batch normalization. For the sake of efficiency, we slightly modify the proposed method in Section~\ref{section:pipeline} in order to avert re-training GANs for different architectures. Specifically, we exclude the diversity loss from the attack pipeline so that multiple architectures can share the same GAN for prior knowledge distillation. Figure~\ref{fig:connection} shows that, in general, the attack performance will be better for models with higher feature predictive powers. Moreover, this trend is consistent across different architectures.

\vspace{-0.5em}
\subsubsection{Attacking Differentially Private Models}
\vspace{-0.5em}

% \begin{wrapfigure}{r}{0.33\textwidth}
% %\vspace{-0.7cm}
%   \begin{center}
%     \includegraphics[width=0.33\textwidth]{different_generalization_error/dp_acc.png}
%   \end{center}
%   %\vspace{-0.3cm}
%   \caption{MI attacks against models with different DP budgets.}
%   %\vspace{-0.3cm}
%   \label{fig:dp}
% \end{wrapfigure}
We investigate the implications of DP for MI attacks. $(\epsilon,\delta)$-DP is ensured by adding Gaussian noise to clipped gradients in each training iteration~\cite{abadi2016deep}. We find it challenging to produce useful face recognition models with DP guarantees due to the complexity of the task. Therefore, we turn to a simpler dataset, \texttt{MNIST}, which is commonly used in differential private ML studies. We set $\delta=10^{-5}$ and vary the noise scale to obtain target networks with different $\epsilon$. The detailed settings of differentially private training are presented in the supplementary material.
% Appendix~\ref{appendix:dp}. 
The attack performance against these target networks and their utility are illustrated in Figure~\ref{fig:connection} (d). Since the attack accuracy of the GMI attack on differentially private models is higher than that of PII which fills missing regions completely based on the public data, it is clear that the GMI attack can expose private information from differentially private models, even with stringent privacy guarantees, like $\epsilon=0.1$. Moreover, varying differential privacy budgets helps little to protect against the GMI attack; sometimes, more privacy budgets even improve the attack performance (e.g., changing $\epsilon$ from 1 to 0.1). This is because DP, in its canonical form, only hides the presence of a single instance in the training set; it does not explicitly aim to protect attribute privacy. Limiting the learning of individual training instances may facilitate the learning of generic features of a class, which, in turn, helps to stage MI attacks.
% the attack performance is indeed lower for more stringent privacy guarantees (i.e., smaller $\epsilon$), but the decrement of attack accuracy is rather small---around 3\%---when varying $\epsilon$ from $\infty$ to $0.1$. 
% This is because DP, in its canonical form, only hides the presence of a single instance in the training set; nevertheless, there are still a large amount of similar instances of the same class that allow the adversary to infer the feature distribution from the released model.

\vspace{-0.5em}
\subsubsection{Results on Other Datasets}
\vspace{-0.5em}
% \begin{wrapfigure}{r}{0.5\linewidth}
% \vspace{-8mm}
%   \begin{center}
%     \includegraphics[width=0.5\linewidth]{}
%   \end{center}
%  \caption{Visualization of the recovered input images by the GMI and the EMI attack.
% %   \bo{people may ask for A+I results in this figure?}
% }
% \label{fig:mnist_vis}
% \end{wrapfigure}

% \begin{figure}
%     \centering
%     \includegraphics[width=0.8\linewidth]{pics/mnist_vis.png}
%  \caption{Visualization of the recovered input images by the GMI and the EMI attack.}
% \label{fig:mnist_vis}
% \end{figure}

For \texttt{MNIST}, we use all $34265$ images with labels $5,6,7,8,9$ as private set, and the rest of $35725$ images with labels $0,1,2,3,4$ as a public dataset. Note that the labels in the private and public data have no overlaps. We augment the public data by training an autoencoder and interpolating in the latent space. Our GMI attack is compared with EMI in Table~\ref{table:more}. We omit the PII baseline because the public and private set defined in this experiment are rather disparate and PII essentially produces results close to random guesses. We can see from the table that the performance of GMI is significantly better than the EMI. 

Moreover, we attack a disease predictor trained on \texttt{ChestX-ray8}. We use $10000$ images of seven classes as the private data and the other $10000$ of different labels as public data. The GMI and EMI attack are compared in Table~\ref{table:more}. Again, the GMI attack outperforms the EMI attack by a large margin. 
% Examples of the recovered images with both attacks are compared in Figure~\ref{fig:mnist_vis}.

\begin{table}[ht!]
\caption{Comparing the GMI against the EMI attack on \texttt{MNIST} and \texttt{ChestX-ray8}.}
\vspace{-0.5em}
\label{table:more}
\centering
\resizebox{0.8\linewidth}{!}{
\begin{tabular}{lllll}
\toprule
\textbf{Dataset} & \textbf{Attack} & \textbf{KNN Dist} & \textbf{Feat Dist} & \textbf{Attack Acc} \\ \midrule
\multirow{2}{*}{\textbf{MNIST}} & \textbf{EMI} & 31.60 & 82.69 & 40\% \\
 & \textbf{GMI} & \textbf{4.04} & \textbf{16.17} & \textbf{80\%} \\ \midrule
\multirow{2}{*}{\textbf{ChestX-ray8}} & \textbf{EMI} & 130.19 & 155.65 & 14\% \\
 & \textbf{GMI} & \textbf{63.42} & \textbf{93.68} & \textbf{71\%} \\ \bottomrule
\end{tabular}
}
\vspace{-1.5em}
\end{table}

\section{Conclusion}
\vspace{-0.5em}
In this paper, we present a generative approach to MI attacks, which can achieve the-state-of-the-art success rates for attacking the DNNs with high-dimensional input data. The idea of our approach is to extract generic knowledge from public datasets via GAN and use it to regularize the inversion problem. Our experimental results show that our proposed attack is highly performant even when the public datasets (1) do not include the identities that the adversary aims to recover, (2) are unlabeled, (3) have small sizes, and (4) come from a different distribution from the private data. We also provide theoretical analysis showing the fundamental connection between a model's predictive power and its vulnerability to inversion attacks. For future work, we are interested in extending the attack to the black-box setting and studying effective defenses against MI attacks.

\vspace{-0.5em}
\subsubsection*{Acknowledgement}
\vspace{-0.5em}
This work is supported by NSF grant No.1910100, the Center for Long-Term Cybersecurity, and the Berkeley Deep Drive.

\newpage
{\small
\bibliographystyle{ieee_fullname}
\bibliography{ref}
}

\onecolumn

\appendix

\section{Proof of Theorem 1}
\label{appendix:proof}

% \begin{Thm}
% Let $f_1$ and $f_2$ be two models such that for any fixed label $y\in \mathcal{Y}$, $U_y(f_1)\geq U_y(f_2)$. Then, $S_\text{KL}(p(X|y)||p_{f_1}(X|y)) \geq  S_\text{KL}(p(X|y)||p_{f_2}(X|y))$.
% \end{Thm}

\begin{Thm}
Let $f_1$ and $f_2$ are two models such that for any fixed label $y\in \mathcal{Y}$, $ U_{f_1}(x_{ns},y)\geq U_{f_2}(x_{ns},y)$. Then, $S_\text{KL}(p(X_s|y,x_{ns})||p_{f_1}(X_s|y,x_{ns})) \geq  S_\text{KL}(p(X_s|y,x_{ns})||p_{f_2}(X_s|y,x_{ns}))$.
\end{Thm}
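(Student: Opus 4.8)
The plan is to show that the negative KL-divergence $S_\text{KL}(p(X_s|y,x_{ns})\,\|\,p_{f}(X_s|y,x_{ns}))$ coincides with the predictive power $U_f(x_{ns},y)$ up to an additive term that does not depend on the model $f$. Once this identity is in hand, the theorem is immediate: instantiating it for $f_1$ and $f_2$ and subtracting cancels the common term and leaves exactly $U_{f_1}(x_{ns},y)-U_{f_2}(x_{ns},y)\ge 0$.

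First I would write the attack posterior together with its normalizer. Abbreviate $q_i(X_s):=p_{f_i}(X_s|y,x_{ns})$ and $p(X_s):=p(X_s|y,x_{ns})$. From the stated proportionality $p_{f_i}(X_s|y,x_{ns})\propto p_{f_i}(y|X_s,x_{ns})\,p(X_s|x_{ns})$, the normalizing constant is $Z_i=\int p_{f_i}(y|X_s,x_{ns})\,p(X_s|x_{ns})\,dX_s$, which equals $p_{f_i}(y|x_{ns})$ under the convention that the model's marginal over the sensitive feature is taken against the true prior $p(X_s|x_{ns})$. Taking logarithms gives
\[
\log q_i(X_s)=\log p_{f_i}(y|X_s,x_{ns})+\log p(X_s|x_{ns})-\log p_{f_i}(y|x_{ns}).
\]

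Next I would expand $S_\text{KL}(p\,\|\,q_i)=-\mathrm{KL}(p\,\|\,q_i)=E_{X_s\sim p}[\log q_i(X_s)]-E_{X_s\sim p}[\log p(X_s)]$ and substitute the expression above. The quantity $E_{X_s\sim p}[\log p(X_s|x_{ns})]-E_{X_s\sim p}[\log p(X_s|y,x_{ns})]$ carries no dependence on $i$; I collect it into a constant $c(x_{ns},y)$. The remaining model-dependent part is $E_{X_s\sim p}[\log p_{f_i}(y|X_s,x_{ns})]-\log p_{f_i}(y|x_{ns})$, and since $\log p_{f_i}(y|x_{ns})$ is constant in $X_s$ this is precisely $U_{f_i}(x_{ns},y)$ by its definition. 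Hence $S_\text{KL}(p\,\|\,q_i)=U_{f_i}(x_{ns},y)+c(x_{ns},y)$.

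Finally, taking the difference between $i=1$ and $i=2$ eliminates $c(x_{ns},y)$ and yields $S_\text{KL}(p\,\|\,q_1)-S_\text{KL}(p\,\|\,q_2)=U_{f_1}(x_{ns},y)-U_{f_2}(x_{ns},y)$, which is nonnegative by hypothesis. I expect the only genuine obstacle to be the bookkeeping around the normalizer: one must justify identifying $Z_i$ with $p_{f_i}(y|x_{ns})$, i.e. that the marginalization implicit in the $-\log p_f(y|x_{ns})$ term of $U_f$ uses the same prior $p(X_s|x_{ns})$ that appears in the attack posterior. With that convention stated explicitly, every remaining step is a routine cancellation, and no convexity or inequality beyond the assumed ordering of $U_{f_1}$ and $U_{f_2}$ is required.
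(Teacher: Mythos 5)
Your proposal is correct and follows essentially the same route as the paper's proof: both rely on expanding $p_{f_i}(X_s|y,x_{ns})$ via the proportionality $p_{f_i}(y|X_s,x_{ns})p(X_s|x_{ns})/p_{f_i}(y|x_{ns})$, cancelling the model-independent terms under $E_{X_s\sim p(X_s|y,x_{ns})}$, and identifying what remains with $U_{f_i}(x_{ns},y)$. The only cosmetic difference is that you first record the identity $S_\text{KL}(p\,\|\,p_{f_i}) = U_{f_i}(x_{ns},y) + c(x_{ns},y)$ and then subtract, whereas the paper subtracts the two KL divergences directly; the computation is the same.
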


\begin{proof}
We can expand the KL divergence  $D_\text{KL}(p(X_s|y,x_{ns})||p_{f_1}(X_s|y,x_{ns})$ as follows. 
\begin{align}
    &D_\text{KL}(p(X_s|y,x_{ns})||p_{f_1}(X_s|y,x_{ns})) \\
    &= E_{X\sim p(X_s|y,x_{ns})} [\log p(X_s|y,x_{ns})] -E_{X\sim p(X_s|y,x_{ns})} [\log p_{f_1}(X_s|y,x_{ns})]
\end{align}
Thus,
\begin{align}
    &D_\text{KL}(p(X_s|y,x_{ns})||p_{f_1}(X_s|y,x_{ns})) - D_\text{KL}(p(X_s|y,x_{ns})||p_{f_2}(X_s|y,x_{ns}))\\
    & = E_{X\sim p(X_s|y,x_{ns})} [\log p_{f_2}(X_s|y,x_{ns}) - \log p_{f_1}(X_s|y,x_{ns})]\\
    & = \sum_{x} p(X_s|y,x_{ns}) \big( \log \frac{p_{f_2}(y|X_s,x_{ns})p(X_s|x_{ns})}{p_{f_2}(y|x_{ns})} - \log \frac{p_{f_1}(y|X_s,x_{ns})p(X_s|x_{ns})}{p_{f_1}(y|x_{ns})}\big)\\
    &= \sum_{x} p(X_s|y,x_{ns}) \bigg(\big(\log p_{f_2}(y|X_s,x_{ns})  - \log p_{f_2}(y|x_{ns}) \big)\nonumber\\
    &\quad \quad - \big(\log  p_{f_1}(y|X_s,x_{ns})  - \log p_{f_1}(y|x_{ns})\big)\bigg)\\
    & = U_{f_2}(x_{ns},y) -  U_{f_1}(x_{ns},y) \leq 0
\end{align}
\end{proof}

% \begin{proof}
% We can expand the KL divergence  $D_\text{KL}(p(X|y)||p_{w_1}(X|y)$ as follows. 
% \begin{align}
%     D_\text{KL}(p(X|y)||p_{f_1}(X|y) = \E_{X\sim p(X|y)} [\log p(X|y)] -\E_{X\sim p(X|y)} [\log p_{f_1}(X|y)]
% \end{align}
% Thus,
% \begin{align}
%     &D_\text{KL}(p(X|y)||p_{f_1}(X|y) - D_\text{KL}(p(X|y)||p_{f_2}(X|y)\\
%     & = \E_{X\sim p(X|y)} [\log p_{f_2}(X|y) - \log p_{f_1}(X|y)]\\
%     & = \sum_{x} p(x|y) \big( \log \frac{p_{f_2}(y|x)p(x)}{p(y)} - \log \frac{p_{f_1}(y|x)p(x)}{p(y)}\big)\\
%     &= \sum_{x} p(x|y) \log \frac{p_{f_2}(y|x)}{p_{f_1}(y|x)}\\
%     & = U_y(f_2) - U_y(f_1) \leq 0
% \end{align}
% \end{proof}

\section{Experimental Details}

\subsection{Network Architecture}
\label{appendix:arch}
The GAN architectures for the GMI attacks without auxiliary knowledge, with corrupted private image, and with blurred private image, are shown in Figure~\ref{fig:PII}, \ref{fig:PII_corrupted}, and \ref{fig:PII_blurred}, respectively. Moreover, in the experiments, we use the same GAN architectures for the PII baseline and the GMI attacks.

\begin{figure}[ht!]
  \centering
  \includegraphics[width=0.7\textwidth]{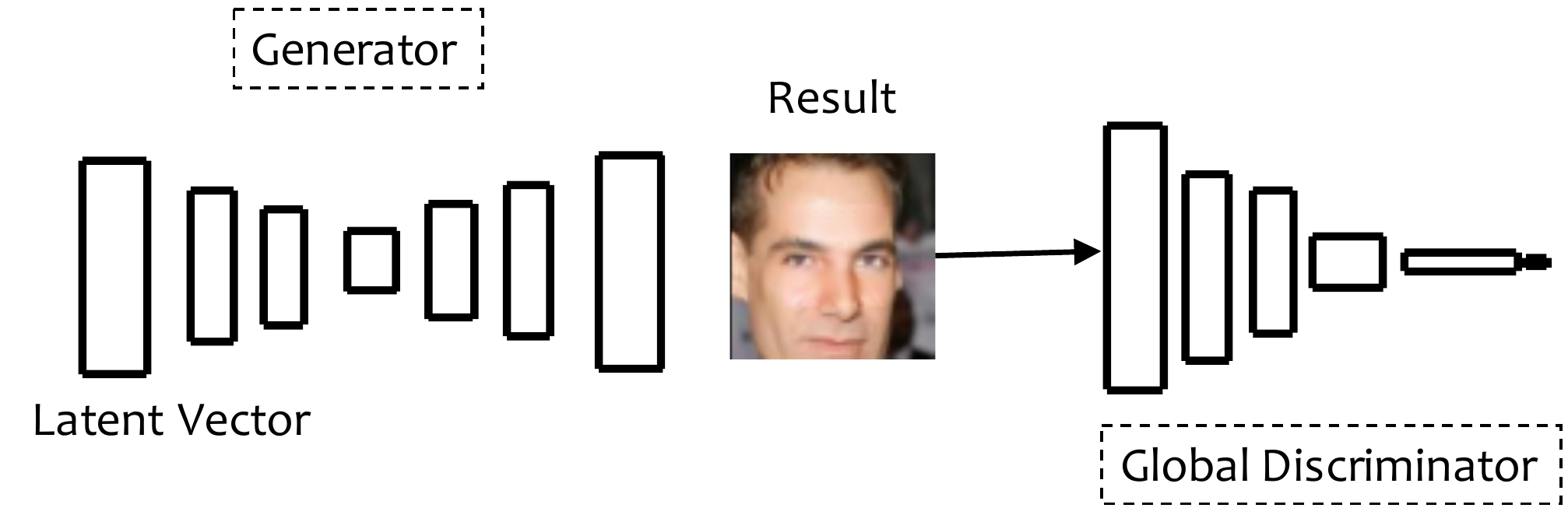}
  %\vspace{-5mm}
  \caption{The GAN architecture for the attack without auxiliary knowledge.
}
%\vspace{-5mm}
 \label{fig:PII}
\end{figure}

\begin{figure}[t!]
  \centering
  \includegraphics[width=\textwidth]{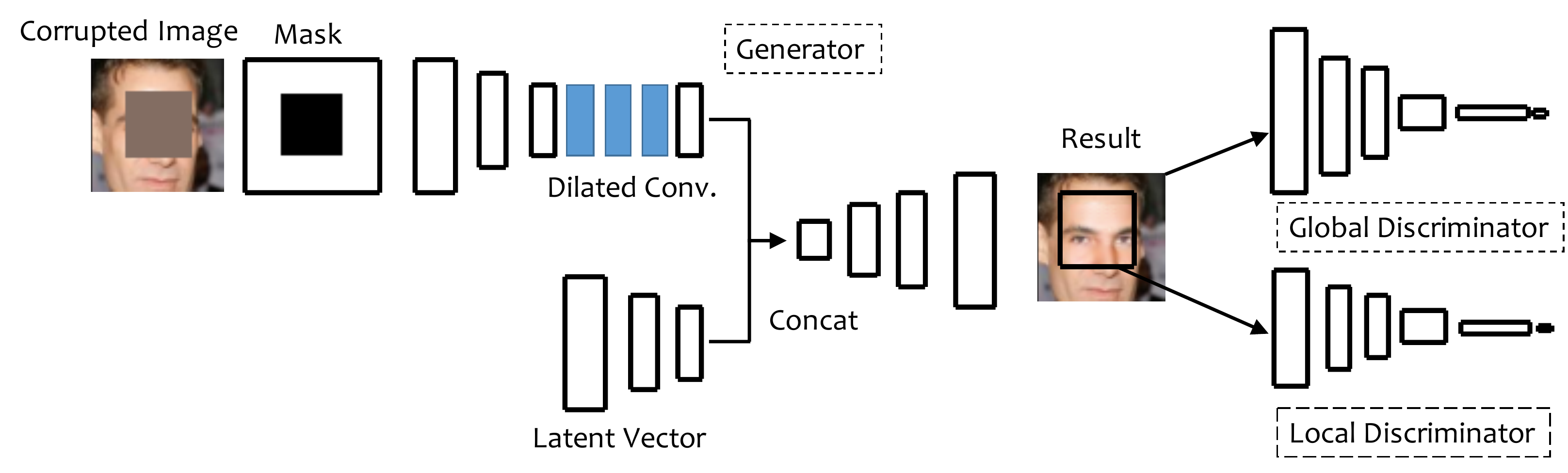}
  %\vspace{-5mm}
  \caption{The GAN architecture for the attack with the auxiliary knowledge of a corrupted private image.
}
%\vspace{-5mm}
 \label{fig:PII_corrupted}
\end{figure}

\begin{figure}[t!]
  \centering
  \includegraphics[width=\textwidth]{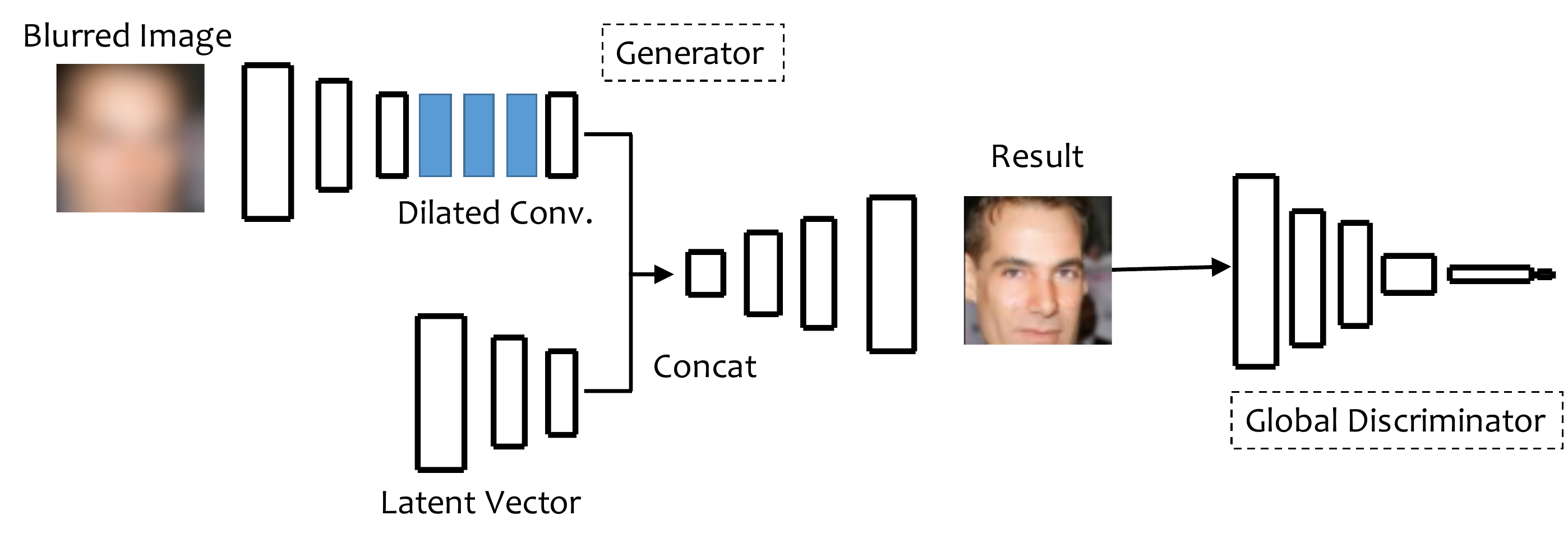}
  %\vspace{-5mm}
  \caption{The GAN architecture for the attack with the auxiliary knowledge of a blurred private image.
}
%\vspace{-5mm}
 \label{fig:PII_blurred}
\end{figure}

The detailed architecture designs of the two encoders, the decoder of the generator, the local discriminator, and the global discriminator are presented in Table~\ref{table:revealer_encoder_upper}, Table~\ref{table:revealer_encoder_lower}, Table~\ref{table:revealer_decoder}, Table~\ref{table:global_disc}, and Table~\ref{table:local_disc}, respectively.

\begin{table}[h!]
\begin{center} 
\caption{When the auxiliary knowledge is a corrupted private image, the upper encoder of the generator takes as input the corrputed RGB image and the binary mask. When the auxiliary knowledge is a blurred private image, the upper encoder only takes an image as input. }
\label{table:revealer_encoder_upper}
\begin{tabular}{lllll}
\hline
Type  & Kernel & Dilation & Stride & Outputs \\ \hline
conv. & 5x5    & 1        & 1x1    & 32      \\
conv. & 3x3    & 1        & 2x2    & 64      \\
conv. & 3x3    & 1        & 1x1    & 128     \\
conv. & 3x3    & 1        & 2x2    & 128     \\ \hline
conv. & 3x3    & 1        & 1x1    & 128     \\
conv. & 3x3    & 1        & 1x1    & 128     \\
conv. & 3x3    & 2        & 1x1    & 128     \\
conv. & 3x3    & 4        & 1x1    & 128     \\
conv. & 3x3    & 8        & 1x1    & 128     \\
conv. & 3x3    & 16       & 1x1    & 128     \\ \hline
\end{tabular}
\end{center}
\end{table}

\begin{table}[h!]
\begin{center}
\caption{The lower encoder of the generator that takes as input the latent vector. }
\label{table:revealer_encoder_lower}
\begin{tabular}{llll}
\hline
Type    & Kernel & Stride    & Outputs \\ \hline
linear  &        &           & 8192    \\
deconv. & 5x5    & 1/2 x 1/2 & 256     \\
deconv. & 5x5    & 1/2 x 1/2 & 128     \\ \hline
\end{tabular}
\end{center}
\end{table}

\begin{table}[h!]
\begin{center}
\caption{The decoder of the generator.}
\label{table:revealer_decoder}
\begin{tabular}{llll}
\hline
Type    & Kernel & Stride    & Outputs \\ \hline
deconv. & 5x5    & 1/2 x 1/2 & 128     \\
deconv. & 5x5    & 1/2 x 1/2 & 64      \\
conv.   & 3x3    & 1x1       & 32      \\
conv.   & 3x3    & 1x1       & 3       \\ \hline
\end{tabular}
\end{center}
\end{table}

\begin{table}[h!]
\begin{center}
\caption{The global discriminator.}
\label{table:global_disc}
\begin{tabular}{llll}
\hline
Type  & Kernel & Stride & Outputs \\ \hline
conv. & 5x5    & 2x2    & 64      \\
conv. & 5x5    & 2x2    & 128     \\
conv. & 5x5    & 2x2    & 256     \\
conv. & 5x5    & 2x2    & 512     \\
conv. & 1x1    & 4x4    & 1       \\ \hline
\end{tabular}
\end{center}
\end{table}

\begin{table}[h!]
\begin{center}
\caption{The local discriminator. This discriminator only appears in the attack with the knowledge of a corrupted image.}
\label{table:local_disc}
\begin{tabular}{llll}
\hline
Type  & Kernel & Stride & Outputs \\ \hline
conv. & 5x5    & 2x2    & 64      \\
conv. & 5x5    & 2x2    & 128     \\
conv. & 5x5    & 2x2    & 256     \\
conv. & 1x1    & 4x4    & 1       \\ \hline
\end{tabular}
\end{center}
\end{table}

The information of some network architectures used in the experiment section but not covered in the main text is elaborated as follows:
(1) \texttt{LeNet} adapted from~\cite{Lecun1998Gradient-Based}, which has three convolutional layers, two max pooling layers and one FC layer; (2) \texttt{SimpleCNN}, which has five convolutional layers, each followed by a batch normalization layer and a leaky ReLU layer; (3) \texttt{SoftmaxNet}, which has only one FC layer.

% \subsection{Additional Experimental Results on the Connection between Model Predictive Power and the Vulnerability}

% In the main text, we show that the GMI attacks perform better against models with higher testing accuracy and the attack performance is measured in terms of the attack accuracy. 

% Figure~\ref{fig:dist} evaluates the attack performance via another metric mentioned in the main text, namely the feature distance. The figure leads to the same finding as before---the models with more predictive power are more vulnerable to model inversion attacks.

% \begin{figure*}[h!]
%   \centering
%   \subfloat[Varying training size]{
%   \includegraphics[width=0.33\textwidth]{different_generalization_error/class_dist.png}
%   \label{fig:train_dist}
%   }
%   \subfloat[Dropout]{
%   \includegraphics[width=0.33\textwidth]{different_generalization_error/dropbar_dist.png}
%   \label{fig:dropout_dist}
%   }
%   \subfloat[Batch normalization]{
%   \includegraphics[width=0.33\textwidth]{different_generalization_error/BN_dist.png}
%   \label{fig:BN_dist}
%   }
%   \caption{The performance of MI attacks against models with different predictive power.
%   }\label{fig:dist}
% \end{figure*}

\subsection{The Detailed Setting of the Experiments on ``Attacking Differentially Private Models'' }
\label{appendix:dp}

We split the \texttt{MNIST} dataset into the private set used for training target networks with digits $0\sim 4$ and the public set used for distilling prior knowledge with digits $5\sim 9$. The target network is implemented as a Multilayer Perceptron with 2 hidden layers, which have 512 and 256 neurons, respectively. The evaluation classifier is a convulutional neural network with three convolution layers, followed by two fully-connected layers. It is trained on the entire MNIST training set and can achieve $99.2\%$ accuracy on the \texttt{MNIST} test set.

% As differential privacy ish  often proposed as a solution to provide private data, it is meaningful to investigate how the privacy budget used to train the target model will affect vulnerability to MI attacks. Here we separate MNIST training set into private set and public set evenly. Target models satisfy $(\varepsilon, \delta)$-differential privacy with $\delta = 1e-5$ and different $\varepsilon$ are trained on the private set and then our GMI approach is utilized for attack.

% As shown in Figure~\ref{fig:dp}, the attack performance is lower for models using less privacy budget, which suggests differential privacy can improve robustness against MI attack in some degree.

% One may suggest that the attack accuracy is still high even when the privacy budget is limited. We propose a potential cause to be the adequate number of data points for each class: While the privacy budget regarding each data as a individual record(i.e. how much the output distribution can change with one sample changes) is tight, a more direct way to measure the difficulty of a differential private learning task is to see how much the output distribution can vary when one can manipulate all the data in a single class, which is related to per class data quantity. Therefore, the improvement could be more significant for a more fine-grained task, which also imply more difficulty on getting high utility, however.

Differential privacy of target networks is guaranteed by adding Gaussian noise to each stochastic gradient descent step. We use the moment accounting technique to keep track of the privacy budget spent during training~\cite{abadi2016deep}. During the training of the target networks, we set the batch size to be 256. We fix the number of epochs to be 40 and clip the L2 norm of per-sample gradient to be bounded by 1.5. We set the ratio between the noise scale and the gradient clipping threshold to be $0, 0.694, 0.92, 3, 28$, respectively, to obtain the target networks with $\varepsilon = \infty, 9.89, 4.94, 0.98, 0.10 $ when $\delta = 10^{-5}$. For model with $\varepsilon = 0.1$, we use the SGD with a small learning rate 0.01 to ensure stable convergence; otherwise, we set the learning rate to be 0.1.
% using privacy accountant to keep track of the budget. Besides, we utilize the idea of micro-batches, which clips the gradients for each micro-batches within a batch before averaging them, to obtain better utility. \wenxiao{Ruoxi, should I cite github.com/tensorflow/privacy or \cite{abadi2016deep} or both?} 
% For the training process, we use a batch-size of 256 and treat each individual sample as a micro-batch. 
% To improve model utility, we compute per sample gradient, clip the gradient norm, and average all gradients within a batch, following~\citep{abadi2016deep}. 

The architecture of the generator in Section~\ref{appendix:arch} is tailored to the \texttt{MNIST} dataset. We reduce the number of input channels, change the size of kernels, and modify the layers of discriminators to be compatible with the shape of the \texttt{MNIST} data. To train the GAN in the first stage of our GMI attack, we set the batch size to be 64 and use the Adam optimizer with the learning rate 0.004, $\beta_1=0.5$, and $\beta_2=0.999$~\cite{kingma2014adam}. For the second stage, we set the batch size to be 64 and use the SGD with the Nesterov momentum that has the learning rate 0.01 and momentum 0.9. The optimization is performed for 3000 iterations. 
% \wenxiao{Reference to be added}

The center mask depicted in the main text is used to block the central part of digits. We report the attack accuracy averaged across 640 randomly sampled images from the private set and 5 random initializations of the latent vector for each sampled image.

% reported is computed as the average attack accuracy for 1280 randomly sampled images from the private set, with 5 randomly initialized latent vectors for each sample.

\end{document}